\let\NAT@parse\undefined
\newcommand{\trs}{\top}
\newcommand{\RR}{\mathbb R}
\newcommand{\B}{\mathcal B}
\newcommand{\D}{\mathcal D}
\newcommand{\E}{\mathcal E}
\newcommand{\I}{\mathcal I}
\newcommand{\M}{\mathcal M}
\renewcommand{\O}{\mathcal O}
\newcommand{\R}{\mathcal R}
\renewcommand{\S}{\mathcal S}
\newcommand{\one}{{\mathds 1}}
\crefname{equation}{}{}
\renewcommand{\algorithmiccomment}[1]{\bgroup\hfill$\triangleright$~#1\egroup}
\DeclareMathOperator*{\argmax}{arg\,max}
\newtheorem{theorem}{Theorem}
\newtheorem{problem}{Problem}
\newtheorem{remark}{Remark}[section]
\newtheorem{definition}{Definition}
\newtheorem{proof}{Proof}
\renewenvironment{proof}{\begin{IEEEproof}}{\end{IEEEproof}\ignorespacesafterend}
\newcommand{\etc}{\textit{etc}}
\newcommand{\ie}{\textit{i}.\textit{e}.\,}
\newcommand{\eg}{\textit{e}.\textit{g}.\,}
\renewcommand{\secref}[1]{Section~\ref{#1}}
\renewcommand{\figref}[1]{Fig.~\ref{#1}}
\newcommand{\tabref}[1]{Table~\ref{#1}}
\newcommand{\defref}[1]{Definition~\ref{#1}}
\renewcommand{\algref}[1]{Algorithm~\ref{#1}}
\title{
Submodular Optimization for Coupled Task Allocation and Intermittent Deployment Problems}
\author{Jun Liu and Ryan K. Williams
\thanks{This work was supported by the National Institute of Food and Agriculture under Grant 2018-67007-28380.}
\thanks{The authors are with the Department of Electrical and Computer Engineering, Virginia Polytechnic Institute and State University, Blacksburg, VA 24061 USA (e-mail: \href{mailto:junliu@vt.edu}{junliu@vt.edu}; \href{mailto:rywilli1@vt.edu}{rywilli1@vt.edu}).}
}
\begin{document}

\bstctlcite{IEEEexample:BSTcontrol}
\maketitle

\pagestyle{empty}
\thispagestyle{empty}

\begin{abstract}
    In this letter, we demonstrate a formulation for optimizing \emph{coupled} submodular maximization problems with provable sub-optimality bounds. In robotics applications, it is quite common that optimization problems are coupled with one another and therefore cannot be solved independently.  Specifically, we consider two problems coupled if the outcome of the first problem affects the solution of a second problem that operates over a longer time scale. For example, in our motivating problem of environmental monitoring, we posit that multi-robot task allocation will potentially impact environmental dynamics and thus influence the quality of future monitoring, here modeled as a multi-robot intermittent deployment problem. The general theoretical approach for solving this type of coupled problem is demonstrated through this motivating example. Specifically, we propose a method for solving coupled problems modeled by submodular set functions with matroid constraints. A greedy algorithm for solving this class of problems is presented, along with sub-optimality guarantees. Finally, practical optimality ratios are shown through Monte Carlo simulations to demonstrate that the proposed algorithm can generate near-optimal solutions with high efficiency.
\end{abstract}

\begin{IEEEkeywords}
    Multi-Robot Systems, scheduling and coordination, coupled, submodular.
\end{IEEEkeywords}

\section{Introduction}

It is common that multi-robot team objectives are intertwined or coupled, with an especially interesting example being objectives that operate sequentially over different time scales. Consider, for example, an environmental monitoring application where we first need to allocate a group of heterogeneous robots to perform a set of tasks, \eg, collecting samples or otherwise interacting with the environment. This problem is well-known as a multi-robot task allocation problem and occurs over a short time scale. However, the critical factor considered in this letter is that the tasks themselves may impact underlying environmental dynamics, and thus future long-term objectives will be influenced. Here we consider the future long-term objective of multi-robot \emph{intermittent deployment}, where we ask: \emph{when is it appropriate to deploy a robotic team for long-term monitoring?} To account for the impact of short-term task allocation on long-term monitoring, we formulate a general coupled submodular optimization problem, which yields a bounded sub-optimal solution for a provably hard problem.

Multi-robot task allocation problems have been studied for a long time \cite{gerkey2004formal,korsah2013comprehensive}. In this letter, our focus will instead be on the intermittent deployment problem and its coupling with task allocation. The idea of intermittence in robotics applications can be either by design or intrinsic.  In our motivating example of environmental monitoring, we design the system to intermittently deploy to reduce cost over the long-term. In \cite{kantaros2017distributed}, the mobile robot networks are required to be connected intermittently, which is achieved through a linear temporal logic method. In \cite{hollinger2010multi}, the robots can only communicate periodically in predefined time steps. On the other hand, in \cite{sinopoli2004kalman}, the authors studied the convergence of Kalman filtering when the measurement arrival time is intrinsically intermittent. A similar application to our deployment problem without the intermittent feature is the sensor scheduling problem \cite{jawaid2015submodularity}, which needs to schedule sensors sequentially to estimate a linear system.

More generally, various multi-robot problems contain two or more sub-problems coupled in some way. In multi-robot motion planning applications, such as collaborative coordination \cite{saha2006multi}, the movement of one robot will impact the others. In the environmental monitoring problem \cite{kalra2007generalized}, different robots need to work together to cover/explore the environment more efficiently.  In humanoid robot manipulation \cite{gienger2008task}, the problem of finding an optimal grasp position and the problem of reaching the object is indeed strongly coupled. Thus, in general, it is necessary to model problem couplings and seek efficient algorithms to provide quality solutions. If the domain of a problem is discrete, we need to consider methods for combinatorial optimization, as we do in this work. For example, the sensor placement problem \cite{krause2008near} seeks to find locations for sensors from a discrete location set to maximize the mutual information for estimating the environment. The abstract task allocation problem seeks to assign robots to tasks to maximize the reward \cite{williams2017decentralized}. Other applications can be found in the target tracking problem, the environmental monitoring problem, \etc. Generally, these problems are NP-hard \cite{feige1998threshold} and cannot be solved optimally by using a polynomial time algorithm. Moreover, if two or more problems are coupled with each other, it is even harder to have quality solutions. Therefore, previous work mainly focuses on generating approximation methods which yield sub-optimality bounds that are often used in practice. In particular, the key focus of late is on greedy algorithms for submodular function optimization. These greedy algorithms usually come with a performance guarantee.
The first provable bound for submodular function optimization over general matroid constraints is shown in \cite{fisher1978analysis}.
Combining the modular and the submodular result as a single result, submodular curvature is used in \cite{conforti1984submodular,sviridenko2017optimal,tzoumas2018resilient} for proving optimality bounds. A recently improved version of monotone submodular function maximization over general matroid constraints by using a multi-linear relaxation scheme is shown in \cite{chekuri2014submodular}. Also, \cite{santiago2016multi} proposed a multivariate version of submodular optimization using the multi-linear extension. This work focuses on minimizing or maximizing a single objective represented as a multivariate function. In our work, the objective function includes two sub-objective functions. For a comprehensive overview of submodular optimization, the reader is referred to \cite{krause14survey} for additional details.

A common practice to solve coupled problems is to solve each problem separately and combine solutions. In this letter, we instead propose to solve coupled submodular optimization problems with general matroid constraints. As an example of such a problem, we couple a task allocation problem \cite{williams2017decentralized} with an intermittent deployment problem \cite{liu2018optimal} where robots are optimally deployed to monitor an environment over time.

\emph{Contributions:} In summary, the contributions of this letter are as follows:
\begin{enumerate}
    \item We formalize a modeling and solution method for coupled submodular optimization problems with general matroid constraints.

    \item We demonstrate how to use matroids to model constraints in robotic applications;

    \item We provide a greedy algorithm with bounded optimality for solving the general coupled optimization problem. We demonstrate this by using a combination of the task allocation problem and the intermittent deployment problem to show the performance in an environmental monitoring application.
\end{enumerate}

\emph{Organization:} The remainder of this letter is organized as follows. We first introduce the preliminaries and the problem formulation in \secref{sec: preliminaries}. In \secref{sec: main}, we present the details about our running example: the multi-robot task allocation problem and the multi-robot intermittent deployment problem. Then, we generalize the properties of our coupled problem formulation. In \secref{sec: alg}, we present a greedy algorithm with provable performance bounds. In \secref{sec: simulations}, we demonstrate the result of the proposed algorithm using Monte Carlo simulations. Conclusions and directions for future work are stated in the final section.

\section{Preliminaries and Problem Formulation}
\label{sec: preliminaries}

As the multi-robot task allocation problem and the multi-robot intermittent deployment problems considered in this letter are discrete in nature, we start with basic definitions related to discrete optimization.

\subsection{Submodular Function Optimization}

A set function \cite{schrijver2003combinatorial} $f: 2^V \mapsto \RR$ is a function that assigns each subset $A \subseteq V$ a value $f(A) \in \RR$, where $V$ is a finite set called the ground set. If $A=(a_1, \ldots, a_n)$ is a sequence and $f: 2^V \mapsto \RR$, then $f(\cdot)$ is a sequence function. Note that different sequences will generate different objective values. For example, if $A_1 = (a_1, a_2)$ and $A_2 = (a_2, a_1)$, then $f(A_1) \neq f(A_2)$ when $f(\cdot)$ is a sequence function. In this letter, we only consider the case of set functions and sequence functions with finite ground sets.
Next, we review set function properties.

\begin{definition}[\cite{schrijver2003combinatorial}]
    A set function $f: 2^{V} \mapsto \RR$ with $V$ as the ground set is
    \begin{itemize}
        \item \emph{normalized}, if $f(\emptyset) = 0$.
        \item \emph{non-decreasing}, if $f(A) \le f(B)$ for all $A \subseteq B \subseteq V$.
        \item \emph{modular}, if $f(A) = \sum_{a \in A} f(a)$ for all $A \subseteq V$.
        \item \emph{submodular}, if $f(A) + f(B) \ge f(A \cup B) + f(A \cap B)$ for all $A, B \subseteq V$.
    \end{itemize}
    \label{def: basic}
\end{definition}

In this letter, we restrict our discussion to normalized functions because an unnormalized function $f: 2^V \mapsto \RR$ can be normalized to $f'(A)$ as $f'(A) = f(A) - f(\emptyset)$.
Another equivalent definition of submodular set function \cite{schrijver2003combinatorial} is that $f(A \cup \{e\}) - f(A) \ge f(B \cup \{e\}) - f(B)$ holds for any $A \subseteq B \subseteq V$ and $e \in V \setminus B$ with $V$ as the ground set for the set function $f: 2^V \mapsto \RR$. This property is called a diminishing return property since the marginal gain $f(\{e\} | A) = f(A \cup \{e\}) - f(A)$ becomes less when $A$ is replaced by a larger set $B$, \ie, $f(\{e\} | A) \ge f(\{e\} | B)$. Examples of non-decreasing submodular functions include:
\begin{itemize}
    \item $f(A) = \max_{i \in A} w_i$ with $A \subseteq V$ and $w_i \ge 0$;
    \item $f(A) = | \bigcup_{i \in A} S_i|$ with $A \subseteq V$ and $S_i \subset V$;
    \item $f(A) = \min \{\sum_{i \in A} w_i, b \}$ with $A \subseteq V$, $w_i \ge 0$, $b \ge 0$.
\end{itemize}
In \defref{def: basic}, if we replace the set function with a sequence function, we can define properties similar to the above. Specifically, if the modularity or submodularity holds for a sequence function, we call it a sequence modular function or a sequence submodular function.

\begin{definition}[\cite{oxley2006matroid}]
    \label{def: matroid}
    A matroid $\M = (V, \I)$ is a pair $(V, \I)$, where $V$ is a finite set (called the ground set) and $\I$ is a collection of subsets of $V$, with the following properties:
    \begin{enumerate}[label=\roman*)]
        \item $\emptyset \in \I$; \label{item: matroid1}
        \item If $X \subseteq Y \in \I$, then $X \in \I$; \label{item: matroid2}
        \item If $X, Y \in \I$ with $|Y| < |X|$, then there exists an element $x \in X \setminus Y$ such that $Y \cup \{x\} \in \I$. \label{item: matroid3}
    \end{enumerate}
\end{definition}

A matroid constraint is a constraint that is represented by admissible subsets of the ground set that satisfy the above axioms. For example, given a ground set $V$, if $\I$ is a collection of subsets of $V$ that are at most size $\ell$, \ie, $\I = \{ A \subseteq V: |A| \le \ell \}$, then $\M = (V, \I)$ is a matroid constraint as Definition \ref{def: matroid} is respected.  In multi-robot task allocation, this simple matroid example could constrain each robot $i$ to choose at most $\ell$ tasks from a ground set. Examples of matroid constraints $\M = (V, \I)$ include:
\begin{itemize}
    \item \emph{Uniform matroid:} $M = (V, \I)$ where $\I = \{A \subseteq V: |A| \le \ell \}$. Examples can be found in resource limited applications in robotics, control, \etc. These resources can be batteries, communication bandwidths, computation resources, information about targets \cite{cesare2015multi,kantaros2018distributed,dong2012tracking}, \etc.

    \item \emph{Partition matroid:}  $M = (V, \I)$ where $\I = \{A \subseteq V: |A \cap V_i| \le \ell_i, \forall i = 1, \ldots, n \}$, $V = \bigcup_{i=1}^n V_i$, and $V_i$'s are disjoint. Examples can be found in heterogeneous systems with limited resources in robotics, control, \etc. For example, a robotic system with different types of robots, sensors, batteries, payloads \cite{jorgensen2017matroid,zhang2013amcl,liu2013square,corah2019distributed}, \etc.
\end{itemize}
The interesting aspect of a matroid is its ability to model constraints that allow for efficiently computable solutions, which is especially useful in robotic applications. We will demonstrate examples and details when matroids are used in our problem in the next section.

An important class of problems that combine submodularity and matroid constraints is submodular maximization subject to a matroid constraint. Specifically, in this problem, given a ground set $V$ and a matroid $\M = (V, \I)$, we want to find a subset $S \subseteq V$ to maximize a submodular function $f: 2^V \mapsto \RR$ such that $S$ satisfies all three matroid axioms, \ie, $S \in \I$. If there are $n$ matroid constraints, \ie, $\M_i = (V, \I_i), \forall i = 1, \ldots, n$, that need to be satisfied, we can write it as a matroid intersection constraint $\M = (V, \I)$ with $\I = \bigcap_{i=1}^n \I_i$. The cardinality of this matroid intersection is $|\M| = n$.

\subsection{Problem Formulation}
\label{problem formulation}

\begin{problem}
\label{problem: 1}
The multi-robot task allocation problem coupled with the multi-robot intermittent deployment is given by:
\begin{align*}
    \underset{A \subseteq E}{\text{maximize}} \quad & g(A) + \max_{B \subseteq V} f(A, B) \\
    \text{subject to} \quad                         & A \in \I_1, B \in \I_2.
\end{align*}
where $A$ is a multi-robot task allocation chosen from the finite ground set $E$ with $A$ satisfying the matroid intersection constraint $\M_1 = (E, \I_1)$, \ie, $A \in \I_1$. The function $g: 2^E \mapsto \RR$ is a utility function for the multi-robot task allocation problem. $B$ is a multi-robot deployment policy chosen from the finite ground set $V$ with $B$ satisfying the matroid intersection constraint $\M_2 = (V, \I_2)$, \ie, $B \in \I_2$. The function $f: 2^{E \times V} \mapsto \RR$ is a utility function for the intermittent deployment problem, where $E \times V$ is the Cartesian product of $E$ and $V$. $f(\cdot)$ is a function of both $A$ and $B$ because we assume that multi-robot task allocations \emph{(first phase, short-term)} have an impact on the multi-robot intermittent deployment action \emph{(second phase, long-term)}.
Then, the objective function is
\begin{equation*}
    \begin{split}
        m(A) & = g(A) + h(A) \\
        & = g(A) + \max_{B \subseteq V} f(A, B).
    \end{split}
\end{equation*}
\end{problem}

In the following sections, we will detail how to build our modular/submodular functions, how to use matroids to model constraints, and how to solve Problem \ref{problem: 1} efficiently.

\section{Coupled Multi-Robot Task Allocation and Intermittent Deployment Problem}
\label{sec: main}

Now, we present the details about each problem and then give the properties of the problem formulation.

\subsection{The Multi-Robot Task Allocation Problem}

The multi-robot task allocation model comes from our previous work \cite{williams2017decentralized}. Briefly, the formulation of this problem with matroid intersection constraint is:
\begin{align*}
    \underset{A \subseteq E}{\text{maximize}} \quad & g(A)        \\
    \text{subject to} \quad                         & A \in \I_1,
\end{align*}
where $g: 2^E \mapsto \RR$ is the utility function and $\M_1 = (E, \I_1)$ is the matroid intersection constraint. The element of the ground set $E$ is represented by the \emph{triplet} $(r, d, e)$ for $r \in \R_1, d \in \D, e \in \E$, and $(d, e) \in \O$. Here, $\R_1$ is the robot ground set. $\O$ is the functionality-requirement ground set. $(r,d,e)$ can be read as ``robot $i$ performs functionality $d$ for the requirement $e$''. Each functionality-requirement pair $(d, e)$ is a task. Therefore, $\O$ can also be viewed as the task ground set. Each triplet $(r,d,e)$ forms an element of an allocation set $A$. The goal is to allocate tasks from $\O$ to the robots in $\R_1$ to form an allocation set $A$ to maximize the utility $g(A)$.

To make it more clear, let's look at an example. In the example, we define $\R_1 = \{1, 2\}$, which means there are two robots available. Specifically, let's consider the case that the first robot is a ground robot and the second one is an aerial robot. Also, if functionality-requirement set is $\O = \{ (d_1, e_1), (d_2, e_2) \}$, where $(d_1, e_1)$ means flying ability (functionality: $d_1$) for a long distance package delivery (requirement: $e_1$), and $(d_2, e_2)$ means moving/flying ability (functionality: $d_2$) for data collection (requirement: $e_2$). Then, we can build constraints for these two robots as follows. Specifically, the constraint for robot 1 is $\I_{1}= \{\{(1, d_2, e_2)\}\}$ and the constraint for robot 2 is $\I_{2} = \{ \{(2, d_1, e_1)\}, \{(2, d_2, e_2)\}\}$. We construct these two constraints due to the reason that the aerial robot 2 can finish both $(d_1, e_2)$ and $(d_2, e_2)$ functionality-requirement pairs while the ground robot 1 can only finish the pair $(d_2, e_2)$. This \emph{independence constraint} $\M_{11}$ with two other constraints,  \emph{uniqueness constraint} $\M_{12}$ and \emph{topology constraint} $\M_{13}$, are matroidal as shown in \cite{williams2017decentralized}. The \emph{uniqueness constraint} requires each functionality-requirement pair can only be allocated no more than once. The \emph{topology constraint} requires the distance of adjacent elements of an allocation is less than a threshold to ensure robots can communicate with each other. The intersection of these matroid constraints forms the matroid intersection constraint $\M_1$ of this problem. For the utility function $g(\cdot)$, a simple example would be the sum of reward for each element $a$ of allocation set $A$, \ie, $g(A) = \sum_{a \in A} u_a$, where $u_a$ is the reward of the allocation element $a = (r,d,e)$.

\subsection{The Multi-Robot Intermittent Deployment Problem}
\label{ssec: inter}

The idea of intermittently deploying a multi-robot system is to render the system more efficient by asking: \emph{When is it appropriate to deploy a robotics team? Which combination of robots is suitable?}

As we will deal with deployment constraints over time, we first partition the ground set $V$ of this problem into disjoint sets $V_1, \ldots, V_K$ over a time horizon of $K$ steps. The partition at time $k$ is $V_k$. Specifically, $V_k = \{ (r, d) | r \in \R_2, d \in \{0,1\} \}$ with $\R_2$ the set of robots for this problem and $d$ the deployment action, where $0$ and $1$ means not deploy and deploy, respectively. To capture the idea of intermittent deployment, we require the deployment policy to satisfy the following constraints:

\begin{enumerate}[label=\arabic*)]
    \item No more than $\ell_k$ robots can be deployed at time $k$ for $k = 1, \ldots, K$. This is our constraint $\M_{21}$. \label{item: M21}
    \item The number of times where there is at least one robot deployed is less than or equal to $\ell$. This is $\M_{22}$. \label{item: M22}
    \item Each robot can only be selected or not selected at every time $k$ for $k = 1, \ldots, K$. This is our constraint $\M_{23}$. \label{item: M23}
\end{enumerate}

\noindent To satisfy \ref{item: M21}, consider the constraint $\M_{21} = (V, \I_{21})$ where
\begin{equation}
    \I_{21} = \{ B \subseteq V: \abs{B \cap V_k} \le \ell_k\}.
    \label{eq: M21}
\end{equation}
To satisfy \ref{item: M22}, consider the constraint $\M_{22} = (V, \I_{22})$ where
\begin{equation}
    \I_{22} = \{ B \subseteq V: \sum_{k=1}^K \one( |B \cap V_k| ) \le \ell\},
    \label{eq: M22}
\end{equation}
and $\one(\cdot)$ is an indicator function that takes the form
\begin{equation*}
    \one( |B \cap V_k| ) =
    \begin{cases}
        1 & \text{if $ |B \cap V_k| \ge 1 $,} \\
        0 & \text{if $ |B \cap V_k| = 0 $.}   \\
    \end{cases}
\end{equation*}
To satisfy \ref{item: M23}, consider the constraint $\M_{23} = (V, \I_{23})$, where
\begin{equation}
    \I_{23} = \{B \subseteq V: |B_r \cap V_k| = 1, \forall r \in \R_2 \}
    \label{eq: M23}
\end{equation}

\begin{figure}[!tbp]
    \centering
    \includegraphics[width=0.8\linewidth]{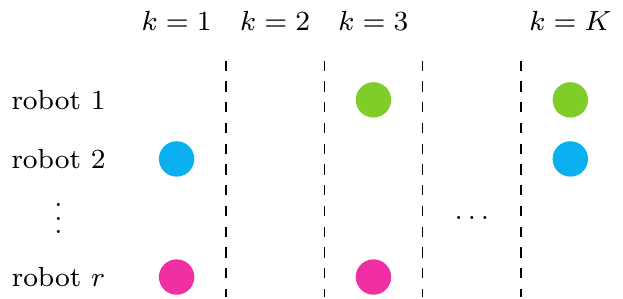}
    \caption{An illustration of the intermittent deployment idea. $r \in \R_2$ and $k=1, \ldots, K$. At time $k=2$, there is no deployment. The constraint \ref{item: M21} and \ref{item: M23} are applied vertically for robots in each time. The constraint \ref{item: M22} is applied horizontally when viewing all robots as a group in each time.}
    \label{fig: demo_intermittent}
\end{figure}

In \figref{fig: demo_intermittent}, we illustrate the intermittent deployment concept. Finally, we must verify that the above constraints are indeed matroidal.

\begin{theorem}
    The constraints $\M_{21}, \M_{22}$ and $\M_{23}$ are matroidal.
\end{theorem}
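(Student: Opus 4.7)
The plan is to verify the three matroid axioms of \defref{def: matroid} separately for each of $\M_{21}$, $\M_{22}$, and $\M_{23}$. Axiom (i) is immediate in every case, since the empty set satisfies $|\emptyset \cap V_k| = 0$ for all $k$ and makes the indicator sum in \eqref{eq: M22} vanish. Axiom (ii) follows directly from monotonicity of $|X \cap V_k|$ in $X$, so the defining inequalities in \eqref{eq: M21}--\eqref{eq: M23} are preserved under taking subsets. The real work is the exchange axiom (iii).

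For $\M_{21}$, I would use the standard partition-matroid pigeonhole argument: the $V_k$ are disjoint, so $\sum_k |X \cap V_k| = |X| > |Y| = \sum_k |Y \cap V_k|$ forces some index $k^\star$ with $|X \cap V_{k^\star}| > |Y \cap V_{k^\star}|$, and any $x \in (X \cap V_{k^\star}) \setminus Y$ then satisfies $|(Y \cup \{x\}) \cap V_{k^\star}| \le \ell_{k^\star}$ while leaving every other block count unchanged. The argument for $\M_{23}$ is essentially the same once the partition is refined to the robot--time pairs $(r, k)$, since the constraint decouples across individual robots at each time step and reduces to a per-block cardinality condition.

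The main obstacle is $\M_{22}$, because the activation indicator $\sum_k \one(|B \cap V_k|)$ is not additive in $|B|$: adding an element to $Y$ either leaves the sum fixed (when $V_k$ already meets $Y$) or increases it by one. I would handle the exchange axiom by a case split. If there exists some $k$ with $|Y \cap V_k| \ge 1$ and $|X \cap V_k| > |Y \cap V_k|$, then any $x \in (X \cap V_k) \setminus Y$ preserves the active-slot set of $Y$ and independence is immediate. In the complementary case, every block where $X$ strictly dominates $Y$ is fresh for $Y$, and my plan is to combine $|X| > |Y|$ with the implied inequality $\sum_{k \in T_Y} |X \cap V_k| \le \sum_{k \in T_Y} |Y \cap V_k|$ (where $T_Y$ denotes the active slots of $Y$) to conclude that the active-slot count of $Y$ must be strictly below $\ell$, so adding one fresh element of $X \setminus Y$ does not overrun the budget \eqref{eq: M22}. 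Carefully carrying out this counting argument, and in particular ruling out the degenerate configurations where $X$ and $Y$ occupy disjoint sets of active time slots, is the delicate step that I expect to dominate the proof.
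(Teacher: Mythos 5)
Your handling of $\M_{21}$ and $\M_{23}$ is fine: since the blocks $V_k$ (respectively the robot--time blocks for $\M_{23}$) are disjoint, both are partition matroids and the pigeonhole exchange you describe goes through; if anything it is more careful than the paper's own sketch, which argues these cases by a contradiction on a single cardinality bound and suppresses the ``for all $k$'' structure.

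The genuine gap is exactly at the step you flagged for $\M_{22}$, and it cannot be closed, because the exchange claim you need is false. In your second case, the hypotheses $|X| > |Y|$ and $\sum_{k \in T_Y} |X \cap V_k| \le \sum_{k \in T_Y} |Y \cap V_k|$ only tell you that $X$ has an element in some block that is fresh for $Y$; they do not force $|T_Y| < \ell$. Concretely, take $K = 2$, $\ell = 1$, two robots, and recall the $V_k$ are disjoint copies. Let $X = \{(1,1),(2,1)\} \subseteq V_1$ (two deployments at time $1$) and $Y = \{(1,1)\} \subseteq V_2$ (one deployment at time $2$). Then $\sum_k \one(|X \cap V_k|) = 1 \le \ell$ and $\sum_k \one(|Y \cap V_k|) = 1 \le \ell$, so both lie in $\I_{22}$ of \eqref{eq: M22}, and $|Y| < |X|$; every block where $X$ dominates $Y$ is fresh for $Y$ and your two inequalities hold, yet $|T_Y| = 1 = \ell$, and adding any element of $X \setminus Y$ to $Y$ activates a second time slot and violates \eqref{eq: M22}. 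So the exchange axiom fails: $\I_{22}$ is hereditary but not matroidal whenever $\ell < K$ and a single $V_k$ may contribute more than one chosen element, and no refinement of the counting argument will rescue this branch --- the constraint itself would have to be reformulated. For what it is worth, the paper's own verification of the exchange property for $\M_{22}$ makes the same unsupported leap: it asserts a contradiction between ``$B_2$ has $\ell$ active slots,'' ``$B_1$ has at most $\ell$ active slots,'' and $|B_2| < |B_1|$, but the example above satisfies all three simultaneously, so that proof does not establish the claim either.
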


\begin{proof}
    We only need to verify property \ref{item: matroid2} and \ref{item: matroid3} of \defref{def: matroid} since \ref{item: matroid1} holds by construction.

    \emph{1) For constraint $\M_{21}$:}

    \emph{For property \ref{item: matroid2}:}
    Consider a set $B_1 \subseteq V$ and assume that for every element $(r, d) \in B_1$ it satisfies that $|B_1 \cap V_k| \le \ell_k$, \ie $B_1 \in \I_{21}$. Now, for any $B_2 \subseteq B_1$ and any element $(r,d) \in B_2$, we know that if $|B_1 \cap V_k| < \ell_k$ then $|B_2 \cap V_k| < \ell_k$. So, if $B_2 \subseteq B_1 \in \I_{21}$, then $B_2 \in \I_{21}$. The property \ref{item: matroid2} is verified.

    \emph{For property \ref{item: matroid3}:}
    Now consider the case $B_1, B_2 \in \I_{21}$. Without loss of generality, we assume that $|B_2| < |B_1|$, \ie, $B_1 \setminus B_2 \neq \emptyset$. Let's assume that there exists no element $e \in (B_1 \setminus B_2)$ such that $B_2 \cup \{e\} \in \I_{21}$. Our assumption implies that $\ell_k$ robots have been allocated in $B_2$, which implies $|B_2| = \ell_k$. However, since $B_1 \in \I_{21}$, it implies $|B_1| \le \ell_k$, we have shown the contradiction as $|B_2| < |B_1|$. So, if $B_1, B_2 \in \I_{22}$ and $|B_2| < |B_1|$, there exists a $e \in (B_1 \setminus B_2)$ such that $B_2 \cup \{e\} \in \I_{21}$. The property \ref{item: matroid3} is verified.

    \emph{2) For constraint $\M_{22}$:}

    \emph{For property \ref{item: matroid2}:}
    Consider a set $B_1 \subseteq V$ and assume that for every element $(r,d) \in B_1$ it satisfies that $\sum_{k=1}^K \one( |B_1 \cap V_k| ) \le \ell$, \ie $B_1 \in \I_{22}$. For elements $(r,d) \in B_1$, there exist three cases: i) $|B_1 \cap V_k| \ge 1$ holds for every $(r,d) \in B_1$; ii) $|B_1 \cap V_k| = 0$ holds for every $(r,d) \in B_1$; iii) $|B_1 \cap V_k| \ge 1$ for some $(r,d) \in B_1$ and $|B_1 \cap V_k| = 0$ holds for the rest of the elements in $B_1$. \emph{Case i)}: if $|B_1 \cap V_k| \ge 1$, then, either $|B_2 \cap V_k| \ge 1$ holds or $|B_2 \cap V_k| = 0$ holds. Therefore, we know $\sum_{k=1}^K \one( |B_2 \cap V_k| ) \le \ell$ holds. \emph{Case ii)}: If $|B_1 \cap V_k| = 0$, then $|B_2 \cap V_k| = 0$, which implies $\sum_{k=1}^K \one( |B_2 \cap V_k| ) \le \ell$ holds. \emph{Case iii)}: this case is a combination of the first two cases, so $\sum_{k=1}^K \one( |B_2 \cap V_k| ) \le \ell$ still holds. So, if $B_2 \subseteq B_1 \in \I_{22}$, then $B_2 \in \I_{22}$ holds. The property \ref{item: matroid2} is verified.

    \emph{For property \ref{item: matroid3}:}
    Now consider the case where we assume $B_1, B_2 \in \I_{22}$ and $|B_2| < |B_1|$. In this case, we have that $B_1 \setminus B_2$ is non-empty. Let's assume that there exists no element $e \in (B_1 \setminus B_2)$, such that $B_2 \cup \{ e \} \in \I_{22}$. This implies that the number of times where there is at least one deployments for $B_2$ has been reached $\ell$. However, from the definition we know that the number of times where there is at least one deployment for $B_1$ is at most $\ell$. So, we have shown a contradiction as $|B_1| < |S_2|$. So, if $B_1, B_2 \in \I_{22}$ and $|B_2| < |B_1|$, then there exists a $e \in (B_1 \setminus B_2)$ such that $B_2 \cup \{ e \} \in \I_{22}$. The property \ref{item: matroid3} is verified.

    \emph{3) For constraint $\M_{23}$:}
    The verification is similar to the verification for $\M_{21}$, and we omit it for brevity.
\end{proof}

The intersection of the above constraints forms the matroid intersection constraint $\M_2 = (V, \I_2)$ of this problem with $\I_2 = \I_{21} \cap \I_{22} \cap \I_{23}$. The matroid modeling method is especially useful for robotics applications when constraints are abstract \cite{williams2017decentralized}. Here we use $\M_{21}$, $\M_{22}$, and $\M_{23}$ as examples to illustrate the idea of multi-robot intermittent deployment constraints. Other constraints for this problem can also be integrated easily into the problem formulation, \eg, the number of times for each robot that can be deployed or the composition of the robot teams that are deployed.

It is straightforward to show that such constraints would also obey the matroid properties.

\subsection{The Submodularity of the Objective Function}

After giving the details about each problem, we now focus on the properties of the problem formulation in this section. The second part of the objective function is $h(A) = \max_{B \subseteq V} f(A, B)$, which takes the task allocation's impact into consideration when evaluating the subsequent multi-robot deployment strategy.
If we are interested in the best payoff a robot can experience from the first phase to the second phase, then we have that $f(A, B) = \max_{a \in A} s(a, B)$. To begin with, we make some definitions as follows
\begin{align}
    B_1 & = \argmax_{B \subseteq V} f(X, B), \  & B_3 = \argmax_{B \subseteq V} f(X \cup Y, B), \label{eq: B_1} \\
    B_2 & = \argmax_{B \subseteq V} f(Y, B), \  & B_4 = \argmax_{B \subseteq V} f(X \cap Y, B). \label{eq: B_2}
\end{align}
Following the definition $h(A) = \max_{B \subseteq V} f(A, B)$, we have
\begin{align}
    h(X) & = f(X, B_1), \  & h(X \cup Y) = f(X \cup Y, B_3), \label{eq: h(X)} \\
    h(Y) & = f(Y, B_2), \  & h(X \cap Y) = f(X \cap Y, B_4). \label{eq: h(Y)}
\end{align}
Using these definitions, the properties of the objective function is now formalized.

\begin{theorem}
    If $f(A, B) = \max_{a \in A} s(a, B)$, the objective function $h(A) = \max_{B \subseteq V} f(A, B)$ is non-decreasing and submodular.
    \label{thm: submodularity}
\end{theorem}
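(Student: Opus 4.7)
The plan is to collapse the nested maximizations in $h$ into a single univariate maximum. Since the two maxes commute,
\[
h(A) \;=\; \max_{B \subseteq V}\,\max_{a \in A} s(a,B) \;=\; \max_{a \in A}\,w(a), \qquad w(a) := \max_{B \subseteq V} s(a,B).
\]
This puts $h$ in the form of the first example of a non-decreasing submodular set function listed after Definition~1, so in principle the theorem reduces to a known fact. I would nonetheless spell the argument out in the notation of (\ref{eq: B_1})--(\ref{eq: h(Y)}), both to stay consistent with the authors' setup and to make clear what the underlying sign assumption on $s$ must be.

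For non-decreasing-ness, I would take any $X \subseteq Y \subseteq E$ and note that for each fixed $B$,
\[
f(X,B) \;=\; \max_{a \in X} s(a,B) \;\le\; \max_{a \in Y} s(a,B) \;=\; f(Y,B),
\]
since enlarging the index set of a max can only increase it (this relies on the standing nonnegativity $s(\cdot,\cdot)\ge 0$, which is also what makes $h(\emptyset)=0$ consistent with the $\max\emptyset = 0$ convention). Taking the max over $B$ on both sides gives $h(X) \le h(Y)$.

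For submodularity I would fix $X,Y \subseteq E$, pick $B_3$ as in (\ref{eq: B_1}), and let $a^\star \in X\cup Y$ attain the inner max in $h(X\cup Y) = \max_{a \in X \cup Y} s(a, B_3)$. By symmetry I may assume $a^\star \in X$. Then
\[
h(X \cup Y) \;=\; s(a^\star, B_3) \;\le\; \max_{a \in X} s(a, B_3) \;=\; f(X, B_3) \;\le\; \max_{B \subseteq V} f(X, B) \;=\; h(X).
\]
Combining this with $h(X \cap Y) \le h(Y)$, which is immediate from the monotonicity established in the previous step, yields $h(X) + h(Y) \ge h(X\cup Y) + h(X\cap Y)$, i.e.\ submodularity. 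The case $a^\star \in Y$ is handled identically by swapping $X$ and $Y$.

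The calculations are short, so the main obstacle is really bookkeeping around the boundary behavior: one must state (or assume from context) that $s(a,B)\ge 0$, so that the convention $\max\emptyset = 0$ gives $h(\emptyset)=0$ and so that adding elements to $A$ cannot decrease the inner max. Once this is fixed, the reduction $h(A) = \max_{a \in A} w(a)$ makes both conclusions essentially a one-liner, and the argument above via $B_1,\ldots,B_4$ is the ``direct'' restatement of that one-liner in the notation already introduced.
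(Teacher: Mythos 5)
Your proof is correct, but it takes a genuinely different (and leaner) route than the paper. The paper proves monotonicity by splitting $h(Y)-h(X)$ into $(f(Y,B_2)-f(Y,B_1))+(f(Y,B_1)-f(X,B_1))$ using the optimizers in \eqref{eq: B_1}--\eqref{eq: h(Y)}, and proves submodularity by establishing the \emph{equality} $\max(f(X,B_1),f(Y,B_2))=f(X\cup Y,B_3)$ in \eqref{eq: submodular_max} (both directions, Parts a.1 and a.2) together with the inequality \eqref{eq: submodular_min}, and then summing via the identity $x+y=\max(x,y)+\min(x,y)$. You instead collapse the nested maxima, $h(A)=\max_{a\in A}w(a)$ with $w(a)=\max_{B\subseteq V}s(a,B)$, which reduces the statement to the first example of a non-decreasing submodular function listed after Definition~\ref{def: basic}; your spelled-out version needs only the ``argmax lies in $X$ or in $Y$'' step (the paper's Part a.1) plus monotonicity applied to $X\cap Y$, so you avoid Part a.2, inequality \eqref{eq: submodular_min} as a separate claim, and the max/min identity altogether. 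What the paper's route buys is the slightly stronger structural fact that $h(X\cup Y)=\max(h(X),h(Y))$, which is of independent interest; what yours buys is brevity and a transparent reduction to a known example. Your remark on nonnegativity is apt but slightly overstated: for nonempty $X\subseteq Y$ the inequality $\max_{a\in X}s(a,B)\le\max_{a\in Y}s(a,B)$ needs no sign assumption, so $s\ge 0$ (or an explicit convention for $\max$ over the empty set) is only needed to handle $A=\emptyset$ and the normalization $h(\emptyset)=0$ --- a boundary case the paper's proof also leaves implicit.
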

\begin{proof}
    \emph{1) Non-decreasing}

    For proving this property, we need to show, for any $X, Y \subseteq E$, if $X \subseteq Y$ then $h(X) \le h(Y)$. When $X \subseteq Y$,
    \begin{equation*}
        \begin{split}
            & h(Y) - h(X) \\
            = & f(Y, B_2) - f(X, B_1) \\
            = & (f(Y, B_2) - f(Y, B_1)) + (f(Y, B_1) - f(X, B_1)).
        \end{split}
    \end{equation*}
    The first equality holds because of \eqref{eq: h(X)} and \eqref{eq: h(Y)}. Following the definition of $B_2$, it holds that $f(Y, B_2) \ge f(Y, B_1)$. Since $X \subseteq Y$ and $f(A, B) = \max_{a \in A} s(a, B)$, it holds that $f(Y, B_1) \ge f(X, B_1)$. Therefore, $h(Y) - h(X) \ge 0$.

    \vskip 0.1in
    \noindent
    \emph{2) Submodularity}

    For proving submodularity, we need to show that for any $X, Y \subseteq E$, the following two hold
    \begin{align}
        \max (f(X, B_1), f(Y, B_2)) = f(X \cup Y, B_3), \label{eq: submodular_max} \\
        \min (f(X, B_1), f(Y, B_2)) \ge f(X \cap Y, B_4). \label{eq: submodular_min}
    \end{align}
    Combining \eqref{eq: submodular_max} and \eqref{eq: submodular_min}, we have $f(X, B_1)+ f(Y, B_2) \ge f(X \cup Y, B_3) + f(X \cap Y, B_4)$.
    This is equivalent to $h(X) + h(Y) \ge h(X \cup Y) + h(X \cap Y)$ and satisfies the submodularity requirement for $h(A)$. So, we only need to prove \eqref{eq: submodular_max} and \eqref{eq: submodular_min}.

    \vskip 0.05in
    \noindent
    \emph{Part a: For proving \eqref{eq: submodular_max}}

    Since any equality $x = y$ can be proven by proving $x \le y$ and $y \le x$ for any $x, y \in \RR$, we will prove \eqref{eq: submodular_max} by proving:
    \begin{align}
        \quad f(X \cup Y, B_3) \le \max (f(X, B_1), f(Y, B_2)), \label{eq: max1} \\
        \quad \max ( f(X, B_1), f(Y, B_2)) \le f(X \cup Y, B_3). \label{eq: max2}
    \end{align}

    \noindent
    \emph{Part a.1: For proving \eqref{eq: max1}}

    From the definition of $B_1$, we have
    \begin{equation*}
        f(X, B_3) \le f(X, B_1) \le \max(f(X, B_1), f(Y, B_2))
        \label{eq: union4}
    \end{equation*}
    These two hold because $B_1 = \argmax_{B \subseteq V} f(X, B)$ and $x \le \max(x, y), \forall x, y \in \RR$. Similarly,
    \begin{equation*}
        f(Y, B_3) \le \max(f(X, B_1), f(Y, B_2)).
        \label{eq: union6}
    \end{equation*}
    We know from the definition that $f(A, B) = \max_{a \in A} s(a, B)$. Then, $f(X \cup Y, B_3) = \max_{a \in X \cup Y} f_1(a, B_3)$. Therefore, we have $f(X \cup Y, B_3) \le \max(f(X, B_1), f(Y, B_2))$ as described in \eqref{eq: max1}.
    This inequality holds because we know that $a \in X \cup Y$ means $a \in X$ or $a \in Y$. If $a \in X$, the first inequality holds. If $a \in Y$, the second inequality holds.

    \vskip 0.05in
    \noindent
    \emph{Part a.2: For proving \eqref{eq: max2}}

    It holds that
    $f(X, B_1) \le f(X \cup Y, B_1) \le f(X \cup Y, B_3).$
    The first inequality holds because of the monotonicity of $f(A, B)$ on $A \subseteq E$ for any $B \subseteq V$, and the second inequality holds because $B_3 = \argmax_{B \subseteq V} f(X \cup Y, B)$. Similarly,
    $f(Y, B_2) \le f(X \cup Y, B_3).$
    Combining these two inequalities, we get the result $\max ( f(X, B_1), f(Y, B_2) ) \le f(X \cup Y, B_3)$.

    \vskip 0.05in
    \noindent
    \emph{Part b: For proving \eqref{eq: submodular_min}}

    It holds that
    $f(X, B_1) \ge f(X, B_4) \ge f(X \cap Y, B_4).$
    The first inequality holds since $B_1 = \argmax_{B \subseteq V} f(X, B)$. The second inequality holds due to the monotonicity of $f(A, B)$ on $A \subseteq E$ for any $B \subseteq V$. Similarly,
    $f(Y, B_2) \ge f(X \cap Y, B_4).$
    Combining these two inequalities, we get the result $\min ( f(X, B_1), f(Y, B_2) ) \ge f(X \cap Y, B_4)$.
\end{proof}

\begin{remark}
    If we are interested in the worst payoff that a robot can experience from the first phase to the second phase, we have $f(A, B) = \min_{a \in A} s(a, B)$. Then, $h(A)$ is non-increasing on $A$.
\end{remark}

It is worth mentioning that the purpose of this letter is to find a method for solving coupled optimization problems in the robotics field, and we use the multi-robot task allocation problem and the multi-robot intermittent deployment problem as examples to illustrate this idea. Other applications can also be applied in this formulation.

\section{Algorithm Analysis}
\label{sec: alg}

\algref{alg} shows the greedy algorithm for solving the coupled problem when both $g(\cdot)$ and $s(\cdot)$ are set functions. If either $g(\cdot)$ or $s(\cdot)$ are a sequence, we only need to change the method slightly. Specifically, if $s(\cdot)$ is a sequence function, we need to change the line 6 in \algref{alg} to $(j \leftarrow 0, \ldots, k \ldots, K-1)$ where $k$ represents the sequence order. We refer to this as a modified version of \algref{alg} for dealing with the sequence function case.

\begin{algorithm}[t]
    \caption{The greedy method for solving the coupled problem}
    \label{alg}
    \textbf{Input:} The inputs are as follows:
    \begin{itemize}
        \item matroid intersection constraints $\M_1$ and $\M_2$;
        \item functions $g(\cdot)$ and $s(\cdot)$.
    \end{itemize}

    \textbf{Output:} Set $A^G$ and set $B^G$.

    \begin{algorithmic}[1]
        \State $A \leftarrow \emptyset$;
        \For{$i \leftarrow 0, \ldots, |E|-1$} \Comment{\emph{step $i$}} \label{time_i}
        \State{$\B \leftarrow \emptyset$};

        \For{$\forall a \in E \setminus A$ and $A \cup \{a\} \in \I_1$}
        \State $B \leftarrow \emptyset$; \quad $V' \leftarrow \emptyset$;
        \For{$j \leftarrow 0, \ldots, |V|-1$} \Comment{\emph{step $j$}} \label{time_j}
        \State $b' \leftarrow \argmax_{b \in V \setminus V'} f(A \cup \{a\}, B \cup \{b\})$;

        \If{$B \cup \{b'\} \in \I_2$}
        \State $B \leftarrow B \cup \{b'\}$;
        \EndIf

        \State $V' \leftarrow V' \cup \{b'\}$;
        \EndFor
        \State $\B \leftarrow \B \cup \{(a, B)\}$;
        \EndFor

        \If{$\B = \emptyset$} \Comment{\emph{no valid $\B$}}
        \State{\textbf{break}};
        \EndIf

        \State $d(A \cup \{a\}, B) \leftarrow g(A \cup \{a\}) + f(A \cup \{a\}, B)$;
        \State $(a', B^G) \leftarrow \argmax_{\{(a, B)\} \in \B} d(A \cup \{a\}, B)$;
        \State $A \leftarrow A \cup \{a'\}$;
        \EndFor
        \State $A^G \leftarrow A$.
    \end{algorithmic}
\end{algorithm}

\begin{theorem}[\emph{Performance \& complexity}]
    Let $A^G$ and $A^\star$ be greedy and optimal solutions, respectively. $m_1 = |\M_1|$, $m_2 = |\M_2|$. \algref{alg} has the following performance:
    \begin{enumerate}
        \item If $g(\cdot)$ is a non-decreasing modular or submodular set function and
              \begin{itemize}
                  \item if $s(a, B)$ is a non-decreasing modular set function on $B$ for any $a $, then, $m(A^G) \ge 1/(m_2(m_1+1)) m(A^\star)$.
                  \item if $s(a, B)$ is a non-decreasing submodular set function on $B$ for any $a$, then, $m(A^G) \ge 1/((m_1+1)(m_2 + 1)) m(A^\star)$.
              \end{itemize}

        \item If $g(\cdot)$ is a non-decreasing modular or submodular set function and $s(a, B)$ is a non-decreasing sequence submodular function on $B$, then $m(A^G) \ge (m_1+1)^{-1}(1-e^{-1/(m_2+1)}) m(A^\star)$.
        \item \algref{alg} has time complexity $\O(|E|^3 \cdot |V|^2)$.
    \end{enumerate}
\end{theorem}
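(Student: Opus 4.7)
My plan is to treat the three cases uniformly by first bounding the inner loop, then lifting that bound through the outer loop, and finally counting operations for the complexity. Throughout, let $\alpha_{\mathrm{in}}$ denote the inner greedy ratio, which takes the values $1/m_2$, $1/(m_2+1)$, and $1-e^{-1/(m_2+1)}$ in the three cases respectively.

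First I would analyze the inner loop (lines~6--12), which for a fixed trial set $A\cup\{a\}$ greedily constructs $B$ under the matroid intersection $\M_2$ of cardinality $m_2$. The three cases plug directly into classical bounds: the Jenkyns/Korte--Hausmann $1/m_2$ ratio for matroid intersection on modular objectives; the Fisher--Nemhauser--Wolsey $1/(m_2+1)$ ratio of \cite{fisher1978analysis} for submodular objectives over matroid intersection; and a Streeter--Golovin-style $1-e^{-1/(m_2+1)}$ ratio for sequence-submodular objectives. In each case the inner output $B_{A\cup\{a\}}$ satisfies $f(A\cup\{a\},B_{A\cup\{a\}})\ge\alpha_{\mathrm{in}}\,h(A\cup\{a\})$.

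Next I would lift this through the outer loop. Because $g$ is non-decreasing modular or submodular and $h$ is non-decreasing submodular by Theorem~\ref{thm: submodularity}, the true coupled objective $m=g+h$ is non-decreasing submodular on $2^E$. I would then introduce the surrogate $F(A)=g(A)+\alpha_{\mathrm{in}}\,h(A)$, which remains non-decreasing submodular and satisfies $F(A)\le g(A)+f(A,B_A)=:\tilde m(A)\le m(A)$, together with $F(A)\ge\alpha_{\mathrm{in}}\,m(A)$ since $\alpha_{\mathrm{in}}\le 1$. The outer greedy selects $a$ to maximize $\tilde m(A\cup\{a\})\ge F(A\cup\{a\})$, so a Fisher--Nemhauser--Wolsey exchange argument transferred to the submodular surrogate $F$ over the matroid intersection $\M_1$ of cardinality $m_1$ gives $\tilde m(A^G)\ge\frac{1}{m_1+1}F(A^\star)\ge\frac{\alpha_{\mathrm{in}}}{m_1+1}m(A^\star)$. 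Combining with $m(A^G)\ge\tilde m(A^G)$ and substituting the three values of $\alpha_{\mathrm{in}}$ recovers parts~(1) and~(2) of the theorem.

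Finally, for part~(3) I would simply count: the outer loop on line~\ref{time_i} runs $|E|$ times, each iteration sweeps up to $|E|$ candidates $a$, each candidate triggers $|V|$ inner steps (line~\ref{time_j}) with an $\argmax$ over at most $|V|$ elements, and evaluating $f(A\cup\{a\},B\cup\{b\})=\max_{a'\in A\cup\{a\}}s(a',B\cup\{b\})$ costs $O(|E|)$; multiplying yields $O(|E|^3|V|^2)$.

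The main obstacle I anticipate is the composition step in the third paragraph: since $\tilde m$ is not itself submodular, the matroid-intersection bound cannot be invoked on it directly, and one must carefully argue that each greedy selection made by the $\tilde m$-rule still dominates the marginal gain that the hypothetical $F$-greedy would have obtained---an inductive exchange argument in the spirit of \cite{fisher1978analysis} is where the technical care will be required.
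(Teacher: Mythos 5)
Your proposal is essentially the paper's own argument: the inner greedy over $\M_2$ supplies the factor $1/m_2$, $1/(m_2+1)$, or $1-e^{-1/(m_2+1)}$ (via \cite{fisher1978analysis}, and \cite{goundan2007revisiting} for the sequence case), the outer greedy over $\M_1$ contributes $1/(m_1+1)$ using the submodularity of $h$ from \thmref{thm: submodularity}, the two factors multiply to give the stated constants, and the complexity follows from the same loop count; your surrogates $F$ and $\tilde m$ are a repackaging of the paper's separate bounds on the $g$-part and the $f$-part, not a different method. The composition step you flag as the remaining obstacle---that the outer greedy maximizes only the inner-greedy surrogate $\tilde m$ rather than $g+h$, so the Fisher--Nemhauser--Wolsey exchange bound cannot be invoked verbatim on $F$---is a genuine subtlety, but it is precisely the point the paper itself treats informally by simply ``combining'' its two bounds (the approximate-incremental-oracle machinery of \cite{goundan2007revisiting}, which is what would close that step, is invoked there only for the sequence case), so your blind attempt reproduces the paper's reasoning, including the part it leaves unargued, and in fact states that weakness more explicitly than the paper does.
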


\begin{proof}
    \emph{1)}
    Let $A^G_i$ denote the greedy output for $A$ at step $i$. In order to get $A^G$, we need to evaluate every $a \notin A^G_i: A^G_i \cup \{a\} \in \I_1$ and incrementally add $a'$ to $A_i$ in terms of maximizing the objective value $m(A^G_i)$. That is, $A_{i+1} = A_i \cup \{a'\}$. In \algref{alg}, we omit the subscript and write it as $A \leftarrow A \cup \{a'\}$ for brevity. This omission also applies to other variables. We cannot evaluate $f(\cdot)$ without knowing $B$ since $f: 2^{E \times V} \mapsto \RR$ and $A \in E, B \in V$. If we can get a $B^\star$ that gives the maximum objective function for each $A^G_i \cup \{a\}$, where $a \notin A^G_i: A^G_i \cup \{a\} \in \I_1$, then it's easy to get $a'$.
    After adding $a$ into $A_i$, we can construct the greedy solution $A_{i+1}$ for $A$ at step $i$, \ie $A^G_{i+1} = A^G_i \cup \{a\}$. Here, $B^\star$ is an optimal solution with respect to every $A^G_i \cup \{a\}$ in terms of objective function value. Due to the intractability for getting $B^\star$ for every $a \notin A^G_i: A^G_i \cup \{a\} \in \I_1$, we propose to use another greedy iteration to get $B^G$ for replacing $B^\star$.

    Now, the only problem left is how to get $B^G$. We construct $B^G_j$ at step $j$ for every $a \notin A^G_i: A^G_i \cup \{a\} \in \I_1$ using a similar greedy method. 
    Notice that there is a $B^G$ corresponding to every $A^G_i \cup \{a\}$, and the final $B^G$ is corresponding to $A^G_{|E|}$.

    If $g(A)$ is non-decreasing and modular on $A \subseteq E$, then $g(A^G) \ge \frac{1}{m_1} g(A^\star)$; if $g(A)$ is non-decreasing and submodular on $A \subseteq E$, then $g(A^G) \ge \frac{1}{m_1 + 1} g(A^\star)$ \cite{fisher1978analysis}. For the second part of the objective function, it holds that $f(A^G, B^\star) \ge \frac{1}{m_1 + 1} f(A^\star, B^\star)$ since $h(A) = \max_{B \subseteq V} f(A, B)$ is a submodular function on $A$ according to \thmref{thm: submodularity}. Then,
    \begin{itemize}
        \item If $g(A)$ is non-decreasing modular on $A \subseteq E$ and we also use the greedy output $B^G$ for $B$, then $m(A^G) \ge \frac{1}{m_1} g(A^\star) + \frac{1}{m_1 + 1} f(A^\star, B^G)$.
        \item If $g(A)$ is non-decreasing submodular on $A \subseteq E$ and we also use $B^G$ for $B$, then $m(A^G) \ge \frac{1}{m_1 + 1} g(A^\star) + \frac{1}{m_1 + 1} f(A^\star, B^G)$.
    \end{itemize}
    When $s(a, B)$ follows the following property, we obtain
    \begin{itemize}
        \item If $s(a, B)$ is non-decreasing modular on $B \subseteq V$ for any $a \in A$, then $f(A^\star, B^G) \ge \frac{1}{m_2} f(A^\star, B^\star)$.
        \item If $s(A, B)$ is non-decreasing submodular on $B \subseteq V$ for any $a \in A$, then $f(A^\star, B^G) \ge \frac{1}{m_2+1} f(A^\star, B^\star)$.
    \end{itemize}
    Finally, combining one result regarding $f(A,B)$ and one result regarding $m(A)$, we can get a corresponding result.

    \emph{2)}
    The analysis is similar to the analysis for the performance 1) except for a small change regarding how to get $B^G$. When considering $\M_1$ and $\M_2$, we can use the result from \cite{goundan2007revisiting} for analyzing the sequence submodular function $s(\cdot)$ and the above analysis. Then, we have the bound as shown in the statement.

    \emph{3) Computational complexity:}
    To get $A^G$, we need to incrementally select an element $a \in E$ for $\O(|E|)$ times. For each $a \in E$, we need to evaluate its objective function value $\O(|E|)$ times. Also, we need to compute $B^G$ corresponding to  $A \cup \{a\}$. However, since $f(A, B) = \max_{a \in A} s(a, B)$, we need $\O(|E|\cdot |V|^2)$ times for getting $B^G$. The finally computational complexity of \algref{alg} is $\O(|E|^3 \cdot |V|^2)$.
\end{proof}

\section{Simulation Results}
\label{sec: simulations}

In this section, we will demonstrate the performance of the intermittent deployment problem. We will also demonstrate the performance of \algref{alg} by using the combination of the task allocation problem and the intermittent deployment problem as a coupled example.

\subsection{Simulation Setup}

\emph{General Settings:} The general setting is as follows. There is a 2D Gaussian mixture (GMM) environment that needs to be monitored. The task allocation problem and the intermittent deployment problem will operate in this environment in the time order. Different task allocation strategies will have different impacts on the environment, which leads to different initial conditions for the intermittent deployment problem.

\emph{The Task Allocation Problem: }For this problem, we use the modular objective function $g(A) = \sum_{a \in A} u_a$, where $a = (r_1, d, e)$ is an assignment and $u_a$ is the reward for robot $r_1 \in \R_1$ finishing the task $(d, e) \in \O$. For comparison, we set the parameters as follows to make sure that we can get the optimal solution. Specifically, we set the number of robot as $|\R_1| \in \{ 2, \ldots, 6\}$, the requirements cardinality as $|\D| \in \{ 2, \ldots, 6\}$, and functionality cardinality as $|\E| \in \{2, \ldots, 6\}$. In simulation, we use these parameters to generate a random problem instance as our ground set $E$. The reward $u_a$ is generated randomly for all $a \in E$ before conducting the simulations. We use the independence constraint $\M_{11}$ and the uniqueness constraint $\M_{12}$ as the constraints. The problem size of this sub-problem is defined as $|\S_1| = |\R_1| \cdot |\D| \cdot |\E|$.

\emph{The Intermittent Deployment Problem:} For this problem, we have $|\R_2|$ robots available at each time $k$, $k = 1, \ldots, K$, for monitoring this GMM environment along the time horizon $K$. The evolution of the weights of the GMM is modeled as a linear system. That is $x_{k+1} = A x_k + w_k$, where $x_k \in \RR^p$ is the state of the GMM weight and $w_k \in \RR^p$ is the zero-mean Gaussian noise. $p$ is the dimension of the GMM that needs to be estimated. The measurement model at time $k$ is $y_{k+1} = C_k x_k + z_k$, where $z_k \in \RR^q$ is the zero-mean Gaussian noise with noise covariance $Z_k \in \RR^{q \times q}$. Different robots have different measurement abilities, which forms different measurement matrices $C_k$. Now, the intermittent deployment problem becomes how to select robots from $\R_2$ to form the measurement matrix $C_k \in \RR^{q \times p}$ at each time $k$ to maximize the objective function $f(A,B)$. The robots should satisfy the constraints in \secref{ssec: inter}. The objective function is a combination of covariance reduction and the reward. We use the objective function from \cite{jawaid2015submodularity} since it has proven to be sequence submodular under assumptions which will be stated in the following. Because $f(A,B) = \max_{a \in A} s(a, B)$, we only need $s(a, B)$. Specifically, $s(a, B) = \log (\det(P_1(a))/\det(P_K(a))) + \sum_{b \in B} u_b$, where $P_1(a)$ is the GMM weight covariance at time $k=1$. $P_1(a)$ is the crucial connection between the task allocation problem and the intermittent deployment problem. This is because different task allocations result in different $P_1(a)$ and $P_1(a)$ is also the initial condition for the intermittent deployment problem. $P_K(a)$ is the GMM weight covariance at time $K$. That is $P_K(a) = (A^{-\trs})^K P_1(a) A^{-K} + \sum_{k=1}^K (A^{-\trs})^{K-k} M_t A^{-(K-k)}$ \cite{jawaid2015submodularity} where $M_k = C_k^\trs Z_k^{-1} C_k$. $u_b$ is the reward associated with robot $b$. The assumptions for $s(a,B)$ to be sequence submodular is that \cite{jawaid2015submodularity} the state transition matrix $A$ is full rank and $w_t = 0$. Also, $A$ needs to satisfy $AP_1(a)A^\trs \preceq P_1(a)$ and $A^\trs M_k A \preceq M_k$. In the simulation, we use $A = I_p$. In each simulation, both $P_1(a)$ and the reward $u_b$ are also generated randomly for each $b \in V$ before starting the simulations. The dimension $p$ is chosen from the set $\{ 2, \ldots, 5\}$. We set the time horizon as $K = \{2, \ldots, 5\}$ and the number of robots as $|\R_2| = \{2, \ldots, 4\}$. We then run the simulation and greedily select robots in each time. Also, the problem size of this sub-problem is defined as $|\S_2| = |\R_2| \cdot K$.

\subsection{Simulation Performance}

\begin{figure}[!tbp]
    \centering
    \includegraphics[width=0.8\linewidth]{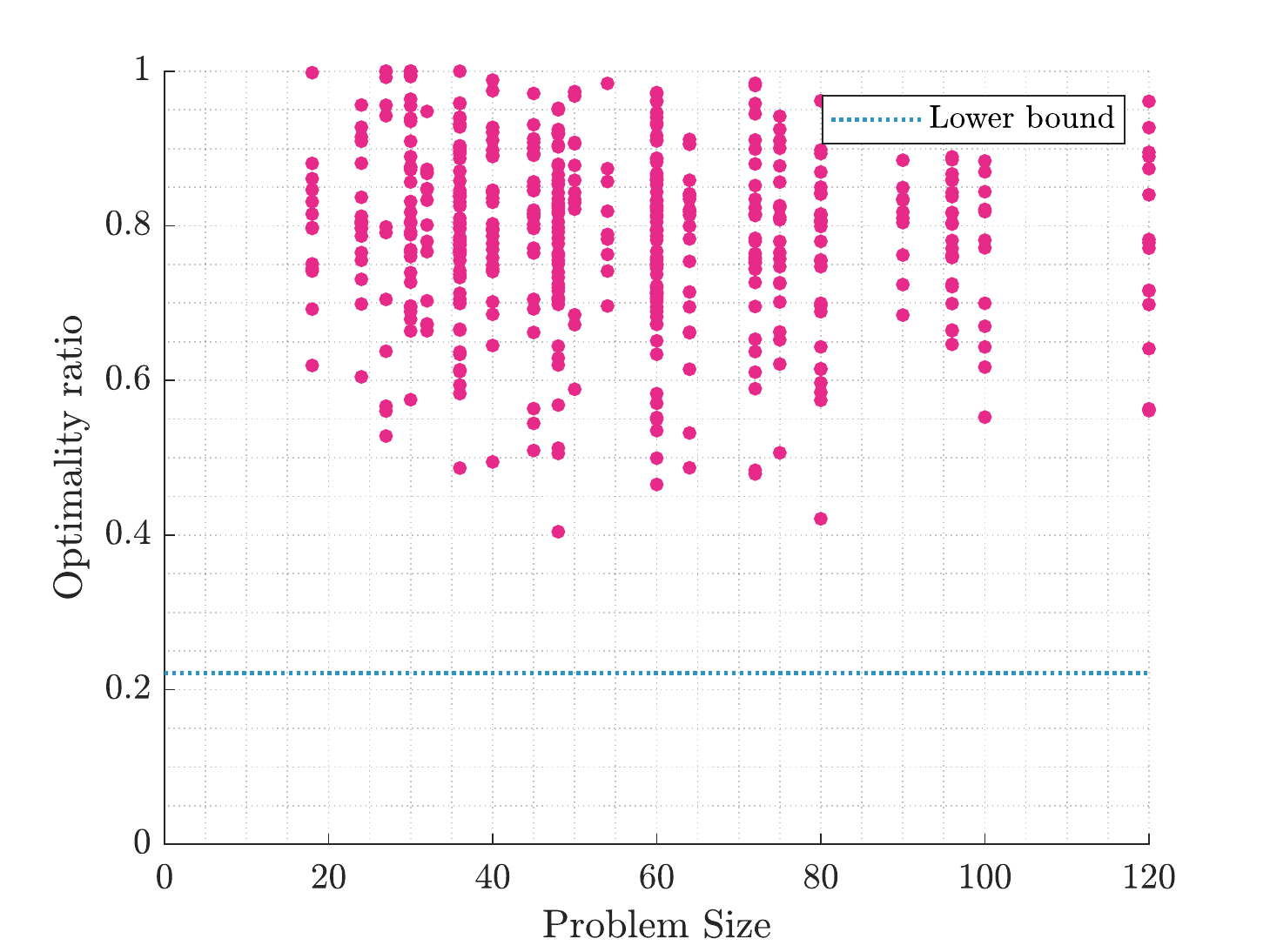}
    \caption{The optimality ratio for the intermittent deployment problem. The problem size is $|\S_2| = |\R_2| \cdot K$.}
    \label{fig: deploy_optimality_ratio}
\end{figure}

\begin{figure}[!tbp]
    \centering
    \includegraphics[width=0.8\linewidth]{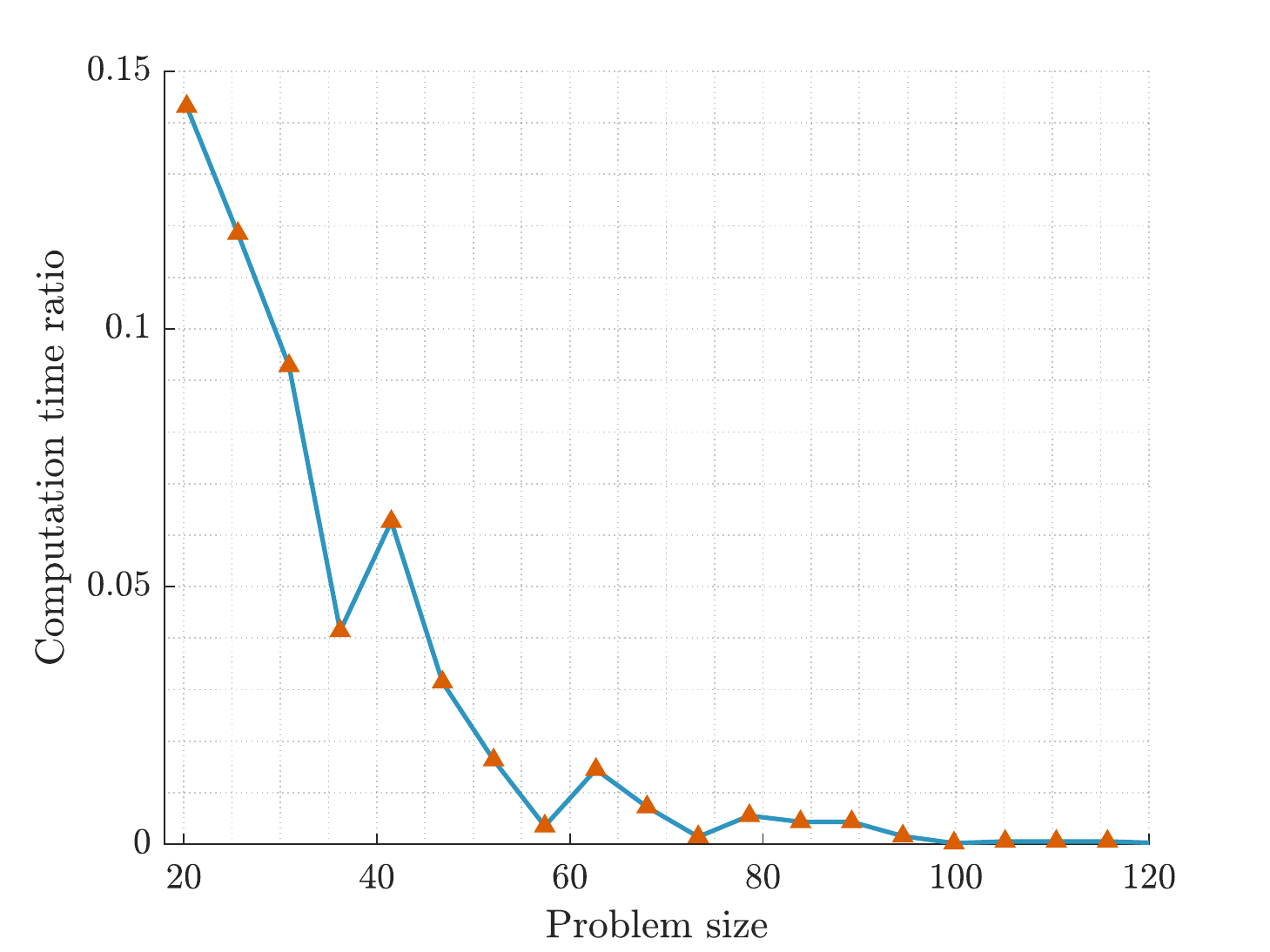}
    \caption{The computation time ratio of the intermittent deployment problem. The problem size is $|\S_2| = |\R_2| \cdot K$.}
    \label{fig: deploy_time_ratio}
\end{figure}

\emph{1) The Intermittent Deployment Problem Performance:}

We first only evaluate the performance of the intermittent deployment problem. To evaluate, we need to know $A$ from the task allocation to get the covariance matrix $P_a$. Here, we use a random covariance matrix $P_a$ as an initial condition. Then, the optimality ratio of the greedy algorithm is shown in \figref{fig: deploy_optimality_ratio}. Also, we define the computation time ratio as $t(B^G)/t(B^\star)$, where $t(G^G)$ is the time for computing the greedy solution and $t(B^\star)$ is the time for computing an optimal solution. We then compute the average computation time ratio for each problem size as shown in \figref{fig: deploy_time_ratio}. We see that the greedy method becomes more efficient as the problem size increases.

\emph{2) The Coupled Problem Performance:}

\emph{Comparison criterion:} We evaluate the performance of the coupled problem. Specifically, we compare the result from the proposed greedy solution with an optimal solution, a heuristic solution, and a random solution. The optimal solution is calculated through the brute force method. A heuristic to solve a coupled problem is to solve each problem separately. We generate the heuristic result using this manner. The random solution is used to demonstrate the effectiveness of the greedy method. The simulation runs 500 times.

\begin{figure*}[htbp]
    \centering
    \subfigure[Optimality ratio: greedy / optimal.]{
        \label{fig: greedy_optimality_ratio}
        \includegraphics[width = .31\linewidth]{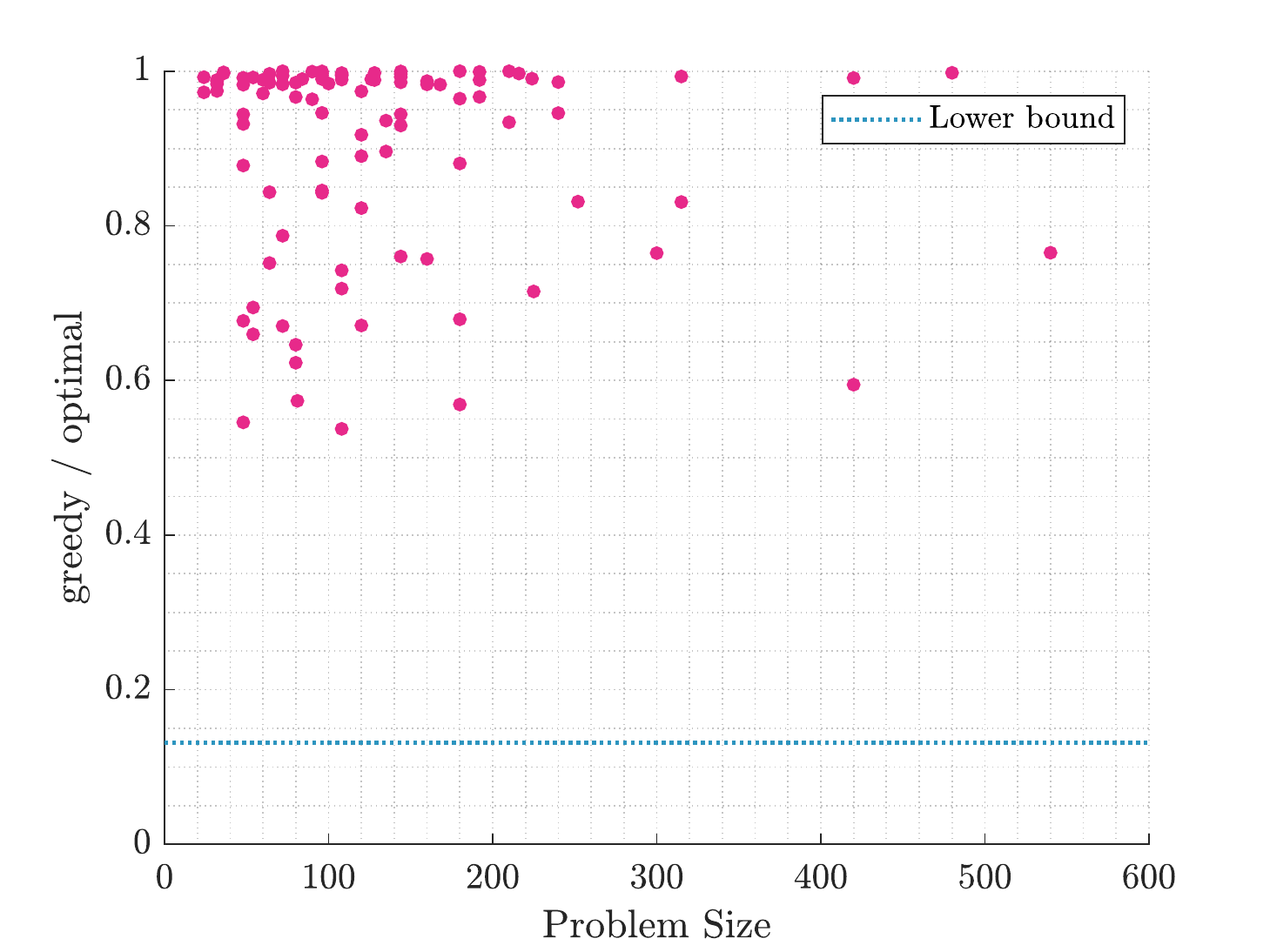}}
    \subfigure[Optimality ratio: heuristic / optimal.]{
        \label{fig: heuristic_optimality_ratio}
        \includegraphics[width = .31\linewidth]{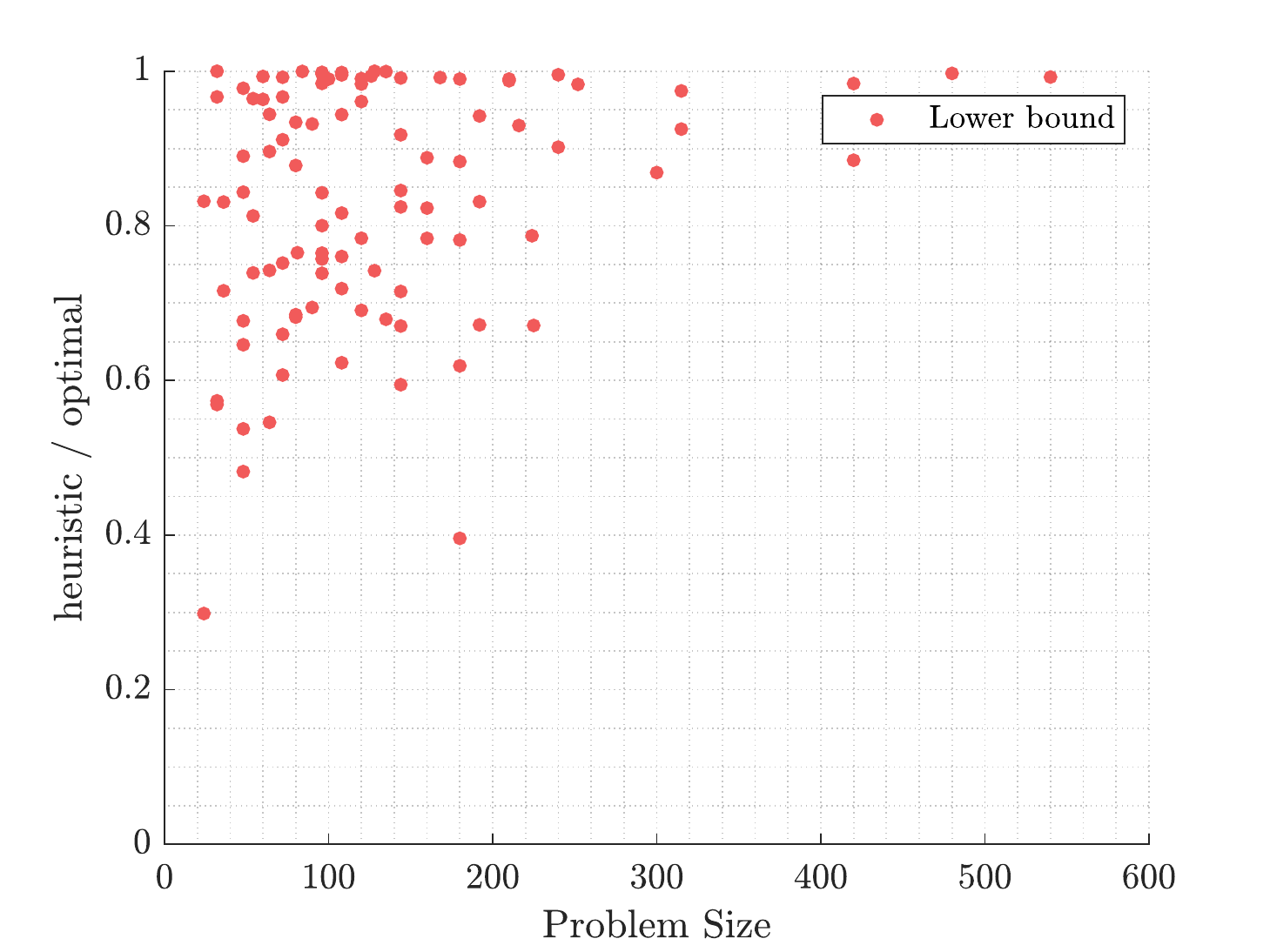}}
    \subfigure[Optimality ratio: random / optimal.]{
        \label{fig: random_optimality_ratio}
        \includegraphics[width = .31\linewidth]{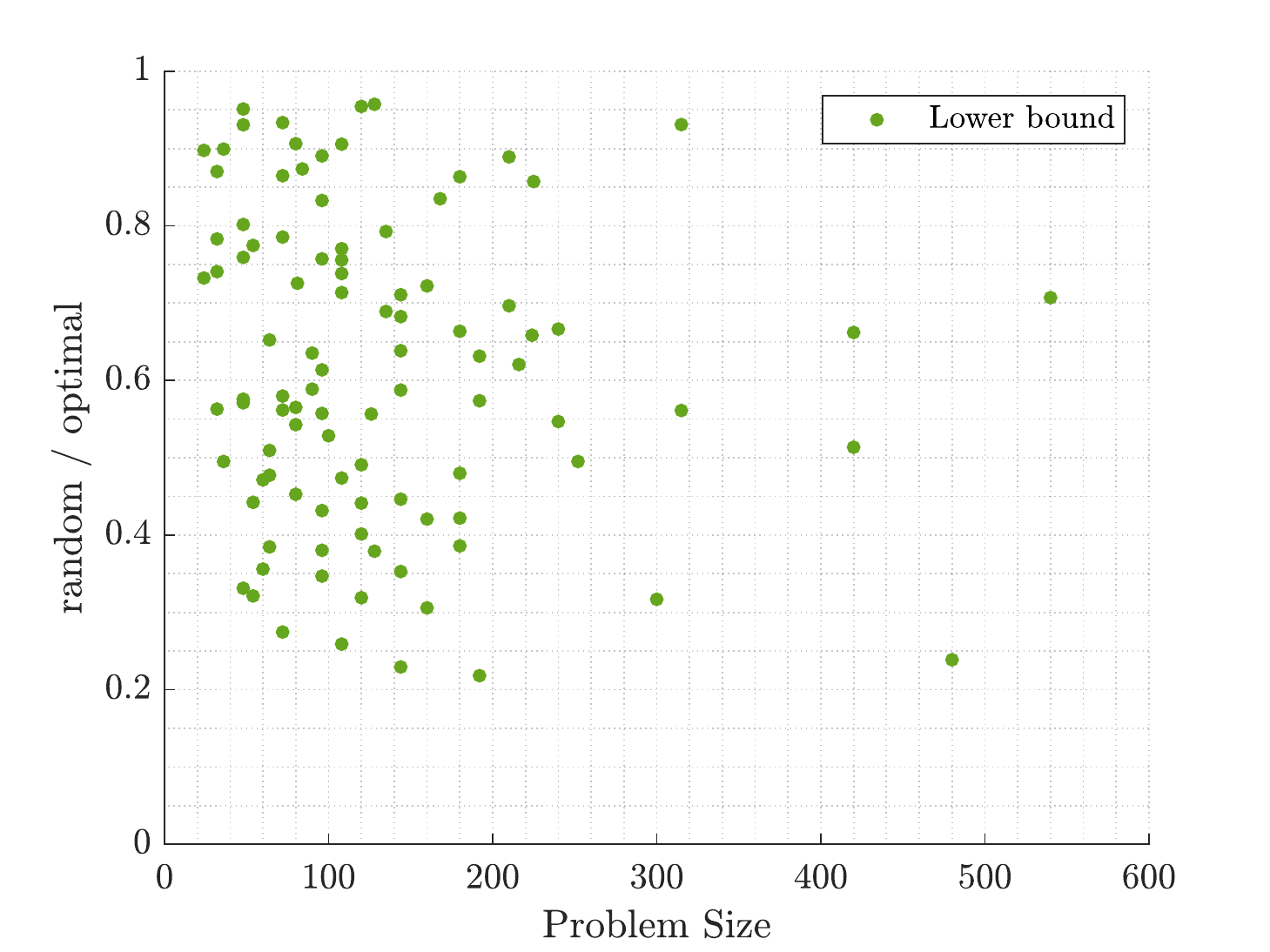}}
    \caption{Monte Carlo simulation performance comparisons: (a) the optimality ratio for the greedy method. (b) the optimality ratio for the heuristic method. (c) the optimality ratio for the random method. The problem size of this coupled problem is $|\S_1| \cdot |\S_2|$.}
\end{figure*}

\emph{Optimality bound:} To characterize the sub-optimality of the proposed greedy algorithm, we calculate the optimality ratio as $m(A^G)/m(A^\star)$. In the simulation, we evaluate the independence matroid constraint $\M_{11}$ and the uniqueness matroid constraint $\M_{12}$ for the multi-robot task allocation problem. For the multi-robot intermittent deployment problem, we use the constraint $\M_{23}$. As shown in \figref{fig: greedy_optimality_ratio}, we observe that the proposed algorithm can generate better optimality ratios than the lower bound in most instances. At the same time, we also plot the result from the heuristic solution in \figref{fig: heuristic_optimality_ratio} and random solution in \figref{fig: random_optimality_ratio}. Further, it occurs often that the optimal solution is found in many instances for the greedy method. To make the comparison more clear, we also calculate the statistics of the optimality ratios of different methods. We calculate the mean and covariance of the optimality ratios for the above three methods. As shown in \tabref{tab: optimality_ratio}, the greedy method has a better performance on average. For the coupled problem, we define the problem size as $|\S_1| \cdot |\S_2|$. Because the complexity of coupled problems increases exponentially as the ground sets size increase, we also observe that even with moderate settings the problem size goes up to $600$, necessitating efficient methods like those proposed in this work.

\begin{table}[!tbp]
    \centering
    \caption{Statistics of the Optimality Ratios of Different Methods}
    \label{tab: optimality_ratio}
    \begin{tabular}{ccc}
        \toprule
        method           & mean        & covariance   \\
        \midrule
        \emph{greedy}    & \emph{0.89} & \emph{0.018} \\
        \emph{heuristic} & 0.83        & 0.024        \\
        \emph{random}    & 0.61        & 0.040        \\
        \bottomrule
    \end{tabular}
\end{table}

\section{Conclusions and Future Work}
\label{sec: conclusions}

In this letter, we presented a method for optimizing coupled problems by using submodular optimization. We demonstrated how to solve the proposed coupled problem by using the task allocation problem and the intermittent deployment problem as motivating examples. From the constraint perspective, we illustrated how to model general constraints as matroid constraints. From the objective function perspective, we demonstrated under which conditions the objective function is (sub)modular, which indicates the existence of an effective greedy algorithm. At the same, we analyzed the performance and computational complexity of our algorithm. In the end, Monte Carlo simulations demonstrated the effectiveness of the proposed algorithm. A direction for future work is to exploit more efficient algorithms with tighter bounds because most of the results achieve better performance in practice than the theoretical optimality bound.

\bibliographystyle{IEEEtran}
\bibliography{ref}

\end{document}